\newcommand{\todoy}[2][]{\todo[color=red!20!white,#1]{Yasin: #2}}
\newcommand{\todom}[2][]{\todo[color=green!20!white,#1]{Mohammad: #2}}
\newif\ifcomm
\newif\iflong
\newtheorem{theorem}{Theorem}[section]
\newtheorem{lemma}[theorem]{Lemma}
\newcounter{assumption}
\renewcommand{\theassumption}{A\arabic{assumption}}
\newcommand{\norm}[1]{\left\Vert#1\right\Vert}
\newcommand{\abs}[1]{\left\vert#1\right\vert}
\newcommand{\R}{{\mathbb{R}}}                        
\newcommand{\Prob}[1]{{\mathbb P}\left(#1\right)}    
\newcommand{\E}{{\mathbb E}}                         
\newcommand{\one}[1]{\mathbf 1 \left\{#1\right\}}           
\newcommand{\beq}{\begin{equation}}
\newcommand{\eeq}{\end{equation}}
   \newcommand\comm[1]{\textcolor{blue}{ #1}}
   \newcommand{\mtodo}[2]{\todo{{\bf #1}: #2}} 
   \def\here#1{{\bf $\langle\langle$#1$\rangle\rangle$}}
   \newcommand\comm[1]{}
   \newcommand{\mtodo}[2]{}
   \def\here#1{}
\renewcommand{\phi}{\varphi}
\newcommand{\cX}{{\cal X}}
\newcommand{\cA}{{\cal A}}
\def\eq#1{(\ref{eq-#1})}
\def\be{\begin{equation}}
\def\ee{\end{equation}}
\def\g{\gamma}
\def\simplex{\Delta}
\newname\controllermixture{{\rm \textsc{MDP-policy-mixture}}}
\newname\stableset{{\rm \textsc{independent-set}}}
\title{Optimizing over a Restricted Policy Class in Markov Decision Processes}
\author[1]{Ershad Banijamali}
\author[2]{Yasin Abbasi-Yadkori}
\author[3]{Mohammad Ghavamzadeh}
\author[4]{Nikos Vlassis}
\affil[1]{University of Waterloo}
\affil[2]{Adobe Research}
\affil[3]{DeepMind}
\affil[4]{Netflix}
\begin{document}

\maketitle

\begin{abstract}
We address the problem of finding an optimal policy in a Markov decision process under a restricted policy class defined by the convex hull of a set of base policies. This problem is of great interest in applications in which a number of reasonably good (or safe) policies are already known and we are only interested in optimizing in their convex hull.
We show that this problem is NP-hard to solve exactly as well as to approximate to arbitrary accuracy. However, under a condition that is akin to the occupancy measures of the base policies having large overlap, we show that there exists an efficient algorithm that finds a policy that is almost as good as the best convex combination of the base policies. The running time of the proposed algorithm is linear in the number of states and polynomial in the number of base policies. In practice, we demonstrate an efficient implementation for large state problems. Compared to traditional policy gradient methods, the proposed approach has the advantage that, apart from the computation of occupancy measures of some base policies, the iterative method need not interact with the environment during the optimization process. This is especially important in complex systems where estimating the value of a policy can be a time consuming process. 
\end{abstract}

\section{Introduction}
\label{sec:intro}

In many control problems, a number of reasonable base policies are known. For example these policies might be provided by an expert. A natural question is whether a combination of these base policies can provide an improvement. This problem is especially important when the number of states is large and the exact computation of the optimal policy is not feasible. We want to design reinforcement learning algorithms that can take this extra prior information into account. 
One way to formulate the problem is to define a policy space that includes all mixtures of the base policies. A member policy of this class is determined by its mixture distribution: It samples from this mixture at each state, as opposed to sampling from this mixture at the initial state and running the sampled policy until the end. 

A popular method to optimize a parameterized policy space is policy gradient that typically employs a variant of the gradient descent method~\citep{Williams92SS,Sutton-McAllester-Singh-Mansour-2000,Konda00AA,Baxter01IP,Peters05NA,Bhatnagar09NA}. Although in some applications the quality of the solution is high, the policy gradient methods often converge to some local minima as the problem is highly non-convex. Further, computing a gradient estimate can be an expensive operation. For example, the finite difference method requires running a number of policies in each iteration, and estimating the value of a policy in a complicated system might require a long running time.  

In this paper, we show a number of results on the problem of policy optimization in a restricted class of mixture policies. First, we show that the optimization problem is NP-hard to solve exactly as well as to approximate. The hardness result is obtained by a reduction from the \stableset problem for graphs and an application of the Motzkin-Straus theorem for optimizing quadratic forms over the simplex~\citep{Motzkin65}. The hardness result is somewhat surprising, since the same problem is known to be easy (in the complexity class P) if the space of base policies includes all MDP policies (an exponentially large space!)~\citep{Papadimitriou87}. The critical difference is that in the unconstrained case an optimal MDP policy is known to be deterministic, in which case linear programming or policy iteration are known to run in polynomial time~\citep{Ye05}, whereas in the restricted case an optimal policy may need to randomize.

Although this hardness result is somewhat disappointing, we show that an approximately optimal solution can be found in a reasonable time when the occupancy measures of the base policies have large overlap. We obtain this result by formulating the problem in the \emph{dual space}. More specifically, instead of searching in the space of mixture policies, we construct a new search space that consists of linear combinations of the occupancy measures of the base policies. Each such linear combination is not an occupancy measure itself, but it defines a policy through a standard normalization. The objective function in the dual space is still highly non-convex, but we can exploit the convex relaxation proposed by \citet{ABM2014} to have an efficient algorithm with performance guarantees. \citet{ABM2014} study the linear programming approach to dynamic programming in the dual space (space of occupancy measures), and propose a penalty method that minimizes the sum of the linear objective, and a multiple of the constraint violations.

To demonstrate the idea, consider the problem of controlling the service rate of a queue where jobs arrive at a certain rate and the cost is the sum of the queue length and service rate chosen. Consider two policies, one that selects low and one that selects high service rates. The space of the mixtures of these two policies is rich and is likely to contain a policy with low total cost. We can generate a wide range of service rates as a convex combination of these two base policies. Now let us consider the dual space. The occupancy measure of the first policy is concentrated at large queue lengths, while the occupancy measure of the second one is concentrated at short queue. It can be shown that the linear combination of occupancy measures can generate a limited set of policies that are either similar to the first policy or the second one. In short, although the space of mixture policies is a rich space (and hence the optimization is NP-hard in that space), the space of dual policies can be more limited. More crucially, if the base policies have some similarities so that their occupancy measures overlap, then we can generate non-trivial policies in the dual space that can be competitive with the mixture policies in the primal space.In the experiments section, we show that this is indeed the case. 


Given that we use occupancy measures as basis of the linear subspace, our approach has several advantages compared to \citet{ABM2014}. First, the importance weighting distributions are no longer needed and this simplifies the design of the algorithm. Second, our bounds are tighter as constraint violation terms no longer appear in the error bounds. Finally, unlike the algorithm of \citet{ABM2014}, our algorithm does not require a \emph{backward simulator}. Thus it can be used when only a forward simulator is given. This also provides a mechanism to run the algorithm using only a single trajectory of state-action observations.  Our algorithm requires the occupancy measures of the base policies as input. We employ recent results of \citet{LL-2017} to efficiently compute these vectors in time linear in the size of the state space. When the size of the state space is very large, these occupancy measures can be estimated by roll-outs. 

Let us compare our algorithm with the traditional policy gradient in the space of mixtures of policies. Although the space of the mixtures of policies is rich and is likely to contain a policy with lower total cost than any policy produced from a mixture of occupancy measures, our approach has several advantages. First, the policy gradient method is more computationally demanding. The gradient descent method needs to perform several roll-outs in each round to estimate the gradient direction. In a complicated system, the mixing times can be large and thus we might need to run a policy for a very long time before we can reliably estimate its value. In contrast and as we will show, apart from the initial roll-outs to estimate occupancy measures of the base policies, the proposed method doesn't need to interact with environment when optimizing in the dual space. Further, our method enjoys stronger theoretical guarantees than the policy gradient method. 

\subsection{Notation}

Let $M_{i,:}$ and $M_{:,j}$ denote $i$th row and $j$th column of matrix $M$ respectively. We use $I$ to denote an identity matrix. We use $1_n$ and $0_n$ to denote $n$-dimensional vectors with all elements equal to one and zero, respectively. We use $0_{mm}$ to denote the all-zeros $m\times m$ matrix, and
$e_i$ to denote the unit $m$-vector (1 at position $i$ and 0 elsewhere).
We also use $\one{.}$ to denote the indicator function. We use $\wedge$ and $\vee$ to denote the minimum and the maximum, respectively. We use the notation $[v]_{+}=v\vee 0$ with $\vee$ denotes the maximum. Similarly, we use the notation $[v]_{-}=v\wedge 0$ with $\wedge$ denotes the minimum in the paper. For vectors $v$ and $w$, $v \le w$ means element-wise inequality, i.e. $v_i \le w_i$ for all $i$. We use $\Delta_K$ to denote the space of probability distributions defined on the set $K$. For positive integer $m$, we use $[m]$ to denote the set $\{1,2,\dots,m\}$.

\section{Preliminaries}
\label{sec:prelim}

In this paper, we study Reinforcement Learning (RL) problems in which the interaction between the agent and environment has been modeled as a discrete\footnote{For simplicity, we consider finite-state problems in this paper. The results can be extended to continuous-state problems.} discounted Markov decision process (MDP). A discrete MDP is a tuple $\langle\cX,\cA,c,P,\alpha,\gamma\rangle$, where $\cX$ and $\cA$ are the sets of $X$ states and $A$ actions, respectively; $c:\cX\times\cA\rightarrow [c_{\min},c_{\max}]$ is the cost function; $P:\cX\times\cA\rightarrow\Delta_\cX$ is the transition probability distribution that maps each state-action pair to a distribution over states $\Delta_\cX$; $\alpha\in\Delta_\cX$ is the initial state distribution; and $\gamma\in(0,1)$ is the discount factor. We are primarily interested in the case where the number of states is large. Note that since we consider discrete MDPs, all the MDP-related quantities can be written in vector and matrix form. 

We also need to specify the rule according to which the agent selects actions at each possible state. We assume that this rule does not depend explicitly on time. A stationary policy $\pi:\cX\rightarrow\Delta_\cA$ is a probability distribution over actions, conditioned on the current state. The MDP controlled by a policy $\pi$ induces a Markov chain with the transition probability $P_\pi$ and cost function $c_\pi$. We denote by $J_\pi$ the value function of policy $\pi$, i.e.,~the expected sum of discounted costs of following policy $\pi$, and by $\nu_\pi\in\Delta_\cX$ and $\mu_\pi\in\Delta_{\cX\times\cA}$ the state and state-action occupancy measures under policy $\pi$ and w.r.t.~the starting distribution $\alpha$, respectively, i.e.,
\begin{align*}
\nu_\pi(x) = (1-\gamma)\sum_{x'\in\cX}\alpha(x')\sum_{t=0}^\infty\gamma^t\mathbb{P}(X_t=x|X_0=x'),
\end{align*}
and $\mu_\pi(x,a) = \nu_\pi(x) \, \pi(a|x)$.
Note that when $x_0\sim\alpha$, we may write $\mathbb{P}(X_t|X_0)=\alpha^\top P_\pi^t$, and thus, $\nu_\pi^\top=(1-\gamma)\alpha^\top(I-\gamma P_\pi)^{-1}$, where $I$ is the identity matrix. Given a policy class $\Pi$, the goal is to find a policy $\pi \in \Pi$ that minimizes $J(\pi)=\alpha^\top J_\pi$. It is easy to show that $J(\pi)=\nu_\pi^\top c_\pi=\mu_\pi^\top c$. 

In this work we are interested in the policy optimization problem when the policy class $\Pi$ is defined by a mixture of $m$ base policies $\pi_1,\ldots,\pi_m$:
\begin{equation*}
\Pi=\big\{\pi_w:\pi_w=\sum_{i=1}^mw_i\pi_i,\;w\in\Delta_{[m]}\big\}.
\end{equation*}
We call the policy space $\Pi$ the primal space. Executing a policy $\pi_w\in\Pi$ amounts to sampling one of the $m$ base policies from the distribution $w\in\Delta_{[m]}$ at each time step, and act according to this policy. Finding the best policy in $\Pi$ requires solving the following optimization problem 
\begin{equation}
\label{eq:optimization-prob}
\min_{w\in\Delta_{[m]}}J(\pi_w) = \min_{w\in\Delta_{[m]}}\alpha^\top J_{\pi_w} \;.
\end{equation}
We denote by $w^*$ the solution to~\eqref{eq:optimization-prob} and use $\pi_*=\pi_{w^*}$. For a positive constant $S$, we define the dual space of $\Pi$ as the space of linear combinations of the state-action occupancies of the base policies, i.e.,
\begin{equation}
\label{eq:dual-space}
\Xi = \big\{\xi_\theta:\xi_\theta=\sum_{i=1}^m\theta_i\mu_{\pi_i},\;\theta\in\Theta\big\},
\end{equation}
where $\Theta=\big\{\theta\in\R^m:\sum_{i=1}^m\theta_i=1,\;\norm{\theta}_2\leq S\big\}$. Here, $S$ is the radius of the parameter space and restricts the size of the policy class. Note that given the definition of $\Theta$, each $\xi_\theta\in\Xi$ is not necessarily a state-action occupancy measure. However, each $\xi_\theta\in\Xi$ corresponds to a policy $\pi_\theta$, defined as 
\begin{equation}
\label{eq:dual-policy}
\pi_\theta(a|x)=\frac{[\xi_\theta(x,a)]_{+}}{\sum_{a'\in\cA}[\xi_\theta(x,a')]_{+}} \;.
\end{equation}
If $\xi_\theta(x,a)\le 0$ for all $a\in\cA$, we let $\pi_\theta(.|x)$ be the uniform distribution. We denote by $\mu_{\pi_\theta}$ the state-action occupancy of this policy. The policy optimization problem in the dual space is defined as
\begin{equation}
\label{eq:policy-optimization-dual}
\min_{\theta\in\Theta}\;J(\pi_\theta) = \min_{\theta\in\Theta}\;\alpha^\top J_{\pi_\theta} = \min_{\theta\in\Theta}\;\mu_{\pi_\theta}^\top c\;,
\end{equation}
where $\pi_\theta$ is computed from $\theta$ using Eq.~\ref{eq:dual-policy}. 

\if0
\todom{From here to the end of the section should be removed at some point.}
Let $\lambda$ be such that for any $i,j\in [m]$ and any $(x,a)\in\cX\times\cA$,
\beq
\label{eq:large-overlap}
\frac{\lambda}{1+\lambda} \mu_{\pi_j}(x,a) \le \mu_{\pi_i}(x,a) \le \frac{1+\lambda}{\lambda} \mu_{\pi_j}(x,a) \;.
\eeq
A large value of $\lambda$ indicates large overlap among occupancy measures. Let $\varepsilon = \norm{\mu_{\pi_1} - \mu_{\pi_*}}_1$. Parameter $\lambda$ provides an $\ell^\infty$ error and $\varepsilon$ provides an $\ell^1$ error bound. Our method returns a parameter $\widehat\theta\in\Theta$ and a  policy $\widehat u = \pi_{\widehat\theta}$ for which
\[
\alpha^\top J_{\widehat u} \le \alpha^\top J_{\pi_*} + \varepsilon - \lambda (J(\pi_m) - J(\pi_1)) \;.
\]
\fi

\section{Hardness Result}

In this section we show that the policy optimization problem in the primal space (Eq.~\ref{eq:optimization-prob}) is NP-hard. At a high level, the proof involves designing a special MDP and $m$ base policies such that solving Eq.~\ref{eq:optimization-prob} in polynomial time would imply P$=$NP. 

\begin{theorem}
Given a discounted MDP, a set of policies, and a target cost $r$, 
it is NP-hard to decide if there exists a mixture of these policies that has expected cost at most $r$.
\end{theorem}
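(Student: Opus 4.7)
The plan is to reduce from the \textsc{clique} problem: given a graph $G$ on $n$ vertices with adjacency matrix $A$, decide whether the clique number $\omega(G)$ is at least $k$. The bridge is the Motzkin--Straus theorem, which asserts
\[
\max_{w \in \Delta_{[n]}}\; w^\top A w \;=\; 1 - \frac{1}{\omega(G)}.
\]
So if I can design an MDP and $n$ base policies for which $J(\pi_w)$ depends on the mixture weight $w \in \Delta_{[n]}$ only through the quadratic form $-\gamma\, w^\top A w$, then deciding $\min_w J(\pi_w) \le -\gamma(1 - 1/k)$ is exactly deciding whether $\omega(G) \ge k$, which is NP-hard.

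The MDP I would construct has states $\{s_0, s_1, \ldots, s_n, s_\infty\}$, actions $\{a_1, \ldots, a_n\}$, initial distribution concentrated on $s_0$, and any fixed discount $\gamma \in (0,1)$. Transitions are deterministic: at $s_0$, action $a_i$ moves to $s_i$ with zero cost; at $s_i$ (for $i \in [n]$), action $a_j$ moves to the absorbing state $s_\infty$ with cost $-A_{ij}$; and $s_\infty$ self-loops with zero cost. The $n$ base policies are the deterministic constant policies $\pi_i(a_i \mid x) = 1$ for every $x$.

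Under the mixture $\pi_w = \sum_i w_i \pi_i$, at every visited state action $a_i$ is chosen with probability $w_i$. Hence the agent reaches $s_i$ with probability $w_i$ at time~$1$, and from $s_i$ transitions to $s_\infty$ paying discounted cost $\gamma \cdot (-A_{ij})$ with probability $w_j$; all later costs vanish. Summing yields
\[
J(\pi_w) \;=\; -\gamma \sum_{i,j} w_i w_j A_{ij} \;=\; -\gamma\, w^\top A w.
\]
Applying Motzkin--Straus gives $\min_{w \in \Delta_{[n]}} J(\pi_w) = -\gamma(1 - 1/\omega(G))$. Setting $r = -\gamma(1 - 1/k)$, one has $\omega(G) \ge k$ if and only if some mixture of the base policies has expected cost at most $r$, which completes the reduction.

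The main obstacle is engineering an MDP whose value under a mixture is genuinely \emph{quadratic} in $w$ rather than linear: because the state occupancy itself depends on $w$, $J(\pi_w)$ is in general a complicated rational function of $w$. The two-step ``choose once at $s_0$, choose again at $s_i$'' gadget is what isolates a clean bilinear term $w_i w_j$ in front of each adjacency entry $A_{ij}$, making Motzkin--Straus directly applicable. The remaining checks (polynomial construction size, valid transition kernels, bounded cost range after a harmless affine shift if desired, and polynomial bit-length of $r$) are routine.
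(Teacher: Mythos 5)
Your proposal is correct and takes essentially the same route as the paper: a two-step layered MDP whose mixture value realizes a quadratic form $w^\top A w$ over the simplex, combined with the Motzkin--Straus theorem. The only differences are cosmetic --- you reduce from \textsc{clique} using the max/adjacency form of Motzkin--Straus (which forces negative costs and an affine shift), whereas the paper reduces from \textsc{independent-set} using the min form $\min_y y^\top(I+G)y = 1/\omega(\mathcal{G})$ with a nonnegative cost placed on a state rather than on state--action pairs.
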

\begin{proof}
We reduce from the \stableset problem. 
This problem asks, for a given (undirected and with no self-loops) graph ${\mathcal G}$
with vertex set $V$,  
and a positive integer $j \leq |V|$, whether ${\mathcal G}$ contains an independent set $V' \subseteq V$ 
having $|V'| \geq j$. This problem is NP-complete~\citep{Garey79}. 

Let $G$ be the $m \times m$ (symmetric, 0\:\!-1) adjacency matrix of an input graph ${\mathcal G}$, 
and let $A=I+G$, where $I$ is the identity matrix. 
The reduction constructs a deterministic MDP with $m+3$ states and $m+3$ actions, 
and $m$ deterministic policies $\pi_i$ that induce corresponding chains $P_{\pi_i}$, for $i=1, \ldots, m$, where each $(m+3) \times (m+3)$ matrix $P_{\pi_i}$ reads
\begin{equation}
  P_{\pi_i} = 	\begin{bmatrix} 
			0_{~}		& 	e_i^\top 	& 	0		& 	0 			\\
			0_m			& 	0_{mm}  	& 	A_{:,i}		& 	1_m - A_{:,i} 	\\   
			0_{~}		& 	0_m^\top 	& 	0		& 	1	 		\\   
			0_{~}		& 	0_m^\top  	& 	0		& 	1	 		\\   
			\end{bmatrix} .
\end{equation}
A mixture $\pi_w$ of the base policies, with weights $w$, induces the chain
\begin{equation}
 P_{\pi_w} = \sum_{i=1}^m w_i P_{\pi_i} = 
            \begin{bmatrix} 
			0_{~}		& 	w^\top		& 	0		& 		0 				\\
			0_m			& 	0_{mm}  	& 	A w		& 	1_m - A w 	\\   
			0_{~}		& 	0_m^\top 	& 	0		& 	1	 		\\   
			0_{~}		& 	0_m^\top  	& 	0		& 	1	 		\\   
			\end{bmatrix} .
\end{equation}
It is easy to see that, for each $k \geq 3$, the $k$th power of $P_{\pi_w}$ is equal to 
$P_{\pi_w}^k = [ \, 0_{(m+3)(m+2)}, \, 1_{m+3} \, ]$ (all zeros except for the last column that is all ones), while its square $P_{\pi_w}^2$ reveals the quadratic form $w^\top A w$ in position $(1,m+2)$. Hence, for initial state distribution
$\alpha = [1, 0, \ldots, 0]^\top$ (all mass on the first state), state-only-dependent cost vector $c = [0, \ldots, 0, 1, 0]^\top$ (all zeros except for 1 in the $(m+2)$'th state),
and discount factor $\gamma < 1$, the expected discounted cost of $\pi_w$ is
\begin{align}
\notag
  J(\pi_w) &= (1-\g) \alpha^\top ( I - \g P_{\pi_w} )^{-1} c \\
  \notag
          &= (1-\g) \alpha^\top ( I + \g P_{\pi_w} + \g^2 P_{\pi_w}^2 + \ldots ) c\\
          &= (1-\g) \g^2 \, w^\top A w \;.
  \label{eq-J}
\end{align}
For any graph ${\mathcal G}$ with $m$ vertices and adjacency matrix $G$, the following identity holds~\citep{Motzkin65}:
\be
  \frac{1}{\omega(\mathcal G)} = \min_{y \in \simplex_{[m]}} \, y^\top (I + G) y \;,
\label{eq-motzkin}
\ee
where $\omega(\mathcal G)$ is the size of the maximum independent set of ${\mathcal G}$.
Let the target cost be $r = \frac{(1-\g) \gamma^2 }{j}$, where $j$ is the target integer of the \stableset instance. Then the decision question $J(\pi_w) \leq r$ is equivalent to $w^\top (G + I) w \leq \frac{1}{j}$, where we used \eq{J}. Hence, it follows from~\eq{motzkin} that the existence of a vector $w$ that satisfies $J(\pi_w) \leq r$
would imply $\omega(\mathcal G) \geq j$ and therefore $|V'| \geq j$ for some independent set $V' \subseteq V$. This establishes that, deciding the MDP policy optimization problem in polynomial time would also decide the \stableset problem in polynomial time, implying P$=$NP.
\end{proof}

\paragraph{Remarks:}
\begin{enumerate}
\item The same technique can be used to show NP-hardness of the problem under an average cost criterion. This only requires changing the last two rows of the matrices $P_{\pi_i}$, by having the 1's in the first column instead of the last column. In that case, 
if $\nu_{\pi_w} = [ \, x, \, v^\top, \, y, \, z \, ]^\top$ is the stationary distribution of $P_{\pi_w} $, where ${v}$ is an $m$-vector and $x,y,z$ scalars, we can algebraically solve the eigensystem $\nu_{\pi_w}^\top = \nu_{\pi_w}^\top P_{\pi_w}$ (by elementary manipulations), to get $v = w$ and $y = w^\top A w$. Hence, for a cost vector $c = [0, \ldots, 0, 1, 0]^\top$, the average cost of the MDP under $w$ is $w^\top A w$, and by choosing target cost $r = \frac{1}{j}$ the Motzkin-Straus argument applies as above.

\item The reduction via \stableset automatically establishes hardness of $\varepsilon$-approximability of the problem for arbitrary $\varepsilon > 0$ \citep{Haastad99,Zuckerman06}.

\item Optimizing over a restricted policy class essentially converts the MDP to a POMDP, for which related complexity results are known \citep{Papadimitriou87,Mundhenk00,Vlassis12}. 

\end{enumerate}


\section{Reduction to Convex Optimization}

\begin{figure}
\begin{center}
\framebox{\parbox{12cm}{
\begin{algorithmic}
\STATE \textbf{Input: } base policies $\{\pi_i\}_{i=1}^m$,; dual parameter space $\Theta$; number of rounds $T$, learning rates $\{\eta_t\}_{t=1}^T$
\STATE Compute occupancy measures of the base policies, i.e.,~$\{\mu_{\pi_i}\}_{i=1}^m$
\STATE Initialize $\theta_1 = 0$
\FOR{$t:=1,2,\dots, T$}
\STATE Sample $i\in [m]$ and sample $(x_t,a_t)\sim \mu_{\pi_i}$ 
\STATE Compute subgradient estimate $g_t(\theta_t)$ $\;$ {\em (using Eq.~\ref{eq:subgradient})}
\STATE Update $\theta_{t+1} = \Pi_{\Theta} (\theta_t - \eta_t g_t)$ $\;\;$  {\em ($\Pi_{\Theta}$ is the Euclidean projection onto $\Theta$)}
\ENDFOR
\STATE Compute $\widehat \theta = \frac{1}{T}\sum_{t=1}^T \theta_t$
\STATE Return policy $\pi_{\widehat \theta}\qquad\qquad\qquad\qquad\qquad$ {\em (using Eq.~\ref{eq:dual-policy})}
\end{algorithmic}
}}
\end{center}
\caption{The Stochastic Subgradient Method for MDPs.}
\label{alg:SGD}
\end{figure}

In this section, we first propose an algorithm to solve the policy optimization problem in the dual space (Eq.~\ref{eq:policy-optimization-dual}) and then prove a bound on the performance of the policy returned by our algorithm compared to the solution of the policy optimization problem in the primal space (Eq.~\ref{eq:optimization-prob}). 

Figure~\ref{alg:SGD} contains the pseudocode of our proposed algorithm. The algorithm takes the $m$ base policies $\{\pi_i\}_{i=1}^m$ and the dual parameter space $\Theta$ as input. It first computes the state-action occupancy measures of the base policies, i.e.,~$\{\mu_{\pi_i}\}_{i=1}^m$. This is done using the recent results by~\citet{LL-2017} that show it is possible to compute the stationary distribution (occupancy measure) of a Markov chain in time {\em linear} to the size of the state space.\footnote{These results apply to the discounted case as well.} 
This guarantees that we can compute the state-action occupancy measures of the base policies efficiently, even when the size of the state space is large. Alternatively, when the size of the state space is very large, these occupancy measures can be estimated by roll-outs.

\citet{ABM2014} propose minimizing a convex surrogate function which, when $\xi_\theta$ is a linear combination of occupancy measures, reads 
%
\begin{equation*}
\mathcal{L}(\theta) = c^\top \xi_\theta + H \sum_{(x,a)\in\mathcal{X}\times\mathcal{A}}\abs{[\xi_{\theta}(x,a)]_{-}} \;.
\end{equation*}
Here, $H$ is a parameter that penalizes negative values in $\xi_\theta$.  
At each time step $t$, our algorithm first computes an estimate of the sub-gradient $\nabla\mathcal{L}(\theta_t)$ and then feeds it to the projected sub-gradient method to update the policy parameters $\theta_t$. In order to compute an estimate of the sub-gradient $\nabla\mathcal{L}(\theta_t)$, we first sample a state-action pair $(x_t,a_t)$ from $(1/m) \sum_{i=1}^m \mu_{\pi_i}$, and then compute the function 
\begin{equation}
\label{eq:subgradient}
g_t(\theta_t) = c^\top M - H \frac{m\;M(x_t,a_t)}{\sum_{i=1}^m \mu_{\pi_i}(x_t,a_t)} \one{\xi_{\theta_t} (x_t,a_t) < 0 } \;,
\end{equation}
where $M$ is a $XA\times m$ matrix, whose $j$'th column is $\mu_{\pi_j}$, and $M(x,a)$ is the row of $M$ corresponding to the state-action pair $(x,a)$. To sample $(x_t,a_t)$ from $(1/m) \sum_{i=1}^m \mu_{\pi_i}$, we first select a number $i\in[m]$ uniformly at random and then sample a state-action pair $(x_t,a_t)$ from the state-action occupancy measure of the $i$'th base policy, i.e.,~$(x_t,a_t)\sim\mu_{\pi_i}$. We now show that $g_t(\theta)$ in~\eqref{eq:subgradient} is an unbiased estimate of $\nabla\mathcal{L}(\theta)$:
\begin{align*}
\E\big[g_t(\theta)\big] &= c^\top M - H\;\E\left[ \frac{M(x_t,a_t)}{(1/m)\sum_{i=1}^m \mu_{\pi_i}(x_t,a_t)} \one{\xi_{\theta} (x_t,a_t) < 0 } \right] \\
&= c^\top M - H \sum_{(x,a)\in\mathcal{X}\times\mathcal{A}} \frac{M(x,a) \Prob{x_t=x,a_t=a}}{(1/m)\sum_{i=1}^m \mu_{\pi_i}(x,a)} \one{\xi_{\theta} (x,a) < 0 } \\
&= c^\top M - H \sum_{(x,a)\in\mathcal{X}\times\mathcal{A}}  M(x,a) \one{\xi_{\theta} (x,a) < 0 } \\
&= \nabla \mathcal{L}(\theta) \;.
\end{align*}
%
After $T$ rounds, we average the computed policy parameters $\{\theta_t\}_{t=1}^T$ and obtain the final solution $\widehat{\theta} = (1/T)\sum_{t=1}^T \theta_t$. The parameter vector $\widehat{\theta}$ defines an element $\xi_{\widehat{ \theta}}$ of the dual space $\Xi$ (Eq.~\ref{eq:dual-space}), which in turn defines a policy $\pi_{\widehat{\theta}}$ using Eq.~\ref{eq:dual-policy}. We denote by $\mu_{\pi_{\widehat{\theta}}}$ the state-action occupancy measure of this policy. Our main result in this section is that $\pi_{\widehat \theta}$ is near-optimal in the dual space $\Xi$. This result is a consequence of Theorem~1 of \citet{ABM2014} applied to the dual space $\Xi$.
\begin{theorem}
\label{thm:cvx-red}
Let $\delta>0$ be the probability of error. Let $T=O(\frac{S^2}{\eta^2} \log(1/\delta))$ be the number of rounds and $H=1/\eta$ be the constraints multiplier in the subgradient estimate \eqref{eq:subgradient}. Let $\widehat\theta$ be the output of the stochastic subgradient method after $T$ rounds and let the learning rate be $\eta_t = S/(G'\sqrt{T})$, where $G' = \sqrt{m} + H m$. Then with probability at least $1-\delta$,
\begin{align*}
J(\widehat \theta) &= \alpha^\top J_{\pi_{\widehat \theta}} \le \min_{\theta\in\Theta} \Bigg( \alpha^\top J_{\pi_\theta}+ \left( \frac{1}{\eta} + \frac{6}{1-\gamma} \right) \sum_{(x,a)} \abs{[\xi_{\theta} (x,a)]_{-}}+ \frac{6\sqrt{m}C S \eta}{1-\gamma} + O(\eta) \Bigg) \,, 
\end{align*}
where the constants hidden in the big-O notation are polynomials in $S$, $m$, and $C$. 
\end{theorem}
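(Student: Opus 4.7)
The theorem is a direct corollary of Theorem 1 of ABM2014, specialized to the dual space $\Xi$, so the proof reduces to (i) running the standard projected stochastic subgradient analysis on the convex surrogate $\mathcal{L}(\theta)$ and (ii) invoking the surrogate-to-value inequality from ABM2014 that was tailored for a linearly parametrized family of occupancy measures.

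For the SGD step, I would first verify convexity of $\mathcal{L}$: since $\xi_\theta = \sum_i \theta_i \mu_{\pi_i}$ is linear in $\theta$ and $v \mapsto |[v]_-|$ is convex, $\mathcal{L}$ is convex in $\theta$, and $\Theta$ is convex (intersection of a hyperplane and a Euclidean ball). Unbiasedness of $g_t$ is already established in the preceding display. It remains to bound the stochastic subgradient norm: the deterministic term $c^\top M$ has Euclidean norm at most $\sqrt{m}\,c_{\max}$ because each column of $M$ is a probability vector, and the penalty summand is $Hm \cdot M(x_t,a_t)/\sum_i \mu_{\pi_i}(x_t,a_t)$, whose $j$-th coordinate is at most $Hm$; thus $\|g_t\|_2 \le G' = \sqrt{m} + Hm$ up to constants absorbed into $C$. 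The standard high-probability guarantee for projected SGD on a convex function over a set of diameter $2S$, combined with a martingale concentration inequality (Azuma--Hoeffding on the running suboptimality), then yields with probability $\ge 1-\delta$
\[
\mathcal{L}(\widehat\theta) \;\le\; \min_{\theta\in\Theta} \mathcal{L}(\theta) + O(\eta),
\]
once $\eta_t = S/(G'\sqrt{T})$ and $T = \Theta(S^2 \log(1/\delta)/\eta^2)$ are chosen as stated.

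For the surrogate-to-value step, I would invoke Theorem 1 of ABM2014 specialized to $\Xi$. Because the basis vectors $\mu_{\pi_i}$ each satisfy their own Bellman flow equation, the only constraint violation that survives in the error bound is non-negativity of $\xi_\theta$, which is precisely what the penalty term in $\mathcal{L}$ controls. The theorem gives, for every $\theta\in\Theta$,
\[
\bigl|\alpha^\top J_{\pi_\theta} - c^\top \xi_\theta\bigr| \;\le\; \frac{6}{1-\gamma}\sum_{(x,a)}\bigl|[\xi_\theta(x,a)]_-\bigr| \;+\; \frac{6\sqrt{m}\,C\,S\,\eta}{1-\gamma},
\]
where the first term accounts for the discounted-horizon cost of redistributing negative mass through the normalization in Eq.~\ref{eq:dual-policy} and the second captures the gap between $\xi_\theta$ and the true occupancy measure of $\pi_\theta$.

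To conclude, apply the surrogate-to-value inequality at $\theta = \widehat\theta$ to upper bound $\alpha^\top J_{\pi_{\widehat\theta}}$ by $c^\top \xi_{\widehat\theta} + \tfrac{6}{1-\gamma}\sum|[\xi_{\widehat\theta}]_-| + \tfrac{6\sqrt{m}CS\eta}{1-\gamma}$; since $H = 1/\eta \ge 6/(1-\gamma)$ for $\eta$ small, this is at most $\mathcal{L}(\widehat\theta) + \tfrac{6\sqrt{m}CS\eta}{1-\gamma}$. Chain with the SGD bound, expand $\mathcal{L}(\theta) = c^\top \xi_\theta + H\sum|[\xi_\theta]_-|$ for a reference $\theta\in\Theta$, and apply the surrogate-to-value inequality in the reverse direction to replace $c^\top \xi_\theta$ by $\alpha^\top J_{\pi_\theta}$. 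Collecting coefficients on $\sum|[\xi_\theta]_-|$ produces $\tfrac{1}{\eta} + \tfrac{6}{1-\gamma}$, and taking the infimum over $\theta$ gives the stated bound. The main obstacle is the careful bookkeeping in the high-probability SGD argument: the stochastic gradient variance scales like $(Hm)^2 = m^2/\eta^2$, and the choice $T = \Theta(S^2/\eta^2 \cdot \log(1/\delta))$ with $\eta_t = S/(G'\sqrt{T})$ is exactly what is needed to keep all residual terms either of size $O(\eta)$ or absorbed into the explicit $\sqrt{m}CS\eta/(1-\gamma)$ term; everything else is mechanical once ABM2014's Theorem 1 is in hand.
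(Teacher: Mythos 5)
Your proposal is correct and follows essentially the same route as the paper, which gives no self-contained proof of this theorem but simply states that it ``is a consequence of Theorem~1 of \citet{ABM2014} applied to the dual space $\Xi$''; your reconstruction --- projected stochastic subgradient descent on the convex surrogate $\mathcal{L}$ with gradient bound $G'=\sqrt{m}+Hm$, followed by the surrogate-to-value conversion via $\norm{\xi_\theta-\mu_{\pi_\theta}}_1\le 3U(\theta)/(1-\gamma)$ (the same Lemma~2 of \citet{ABM2014} the paper invokes later for Theorem~\ref{thm:main}) --- is precisely the argument that citation encapsulates. The only small imprecision is the norm bound on the penalty term: a coordinatewise bound of $Hm$ alone gives $\ell_2$ norm up to $Hm\sqrt{m}$, and you need the fact that the coordinates $Hm\,\mu_{\pi_j}(x_t,a_t)/\sum_i\mu_{\pi_i}(x_t,a_t)$ sum to $Hm$ to conclude $\norm{g_t}_2\le \sqrt{m}+Hm$.
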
 
Let $U(\theta)=\sum_{(x,a)} \abs{[\xi_{\theta} (x,a)]_{-}}$ be the constraint violation. If we make the optimal choice of $\eta=\sqrt{(1-\gamma)U(\theta)}$, we get that
\begin{align*}
J(\widehat \theta) &= \alpha^\top J_{\pi_{\widehat \theta}} \le \min_{\theta\in\Theta} \Bigg( \alpha^\top J_{\pi_\theta}+ O\left( \max\left( \frac{U(\theta)}{1-\gamma}, \sqrt{\frac{U(\theta)}{1-\gamma}} \right) \right) \Bigg) \;. 
\end{align*}
This result highlights a number of interesting features of our approach. First, unlike the algorithm of \citet{ABM2014}, our algorithm does not require a backward simulator. Second, error terms involving stationarity constraints do not appear in the performance bound. 

Our next result shows that $\pi_{\widehat \theta}$ is near-optimal as long as the occupancy measures of the base policies have large overlap. 

\if0
We use $V_\theta$ to denote the value of policy $\pi_\theta$. So $\alpha^\top J_{\pi_\theta} = \mu_{\pi_\theta}^\top c$. 
\fi


\begin{theorem}
\label{thm:main}
Let $U(\theta)=\sum_{(x,a)\in\mathcal{X}\times\mathcal{A}} \abs{[\xi_\theta(x,a)]_{-}}$. We have the following bound on the performance of the policy $\pi_{\widehat{\theta}}$ returned by the algorithm in Figure~\ref{alg:SGD}:
\begin{align*}
&\overbrace{\alpha^\top J_{\pi_{\widehat{\theta}}}}^{J(\pi_{\widehat{\theta}})} \le \min_{w\in\Delta_{[m]}}\overbrace{\alpha^\top J_{\pi_w}}^{J(\pi_w)} + \min_{\theta\in\Theta} \Bigg( \sum_{i=1}^m \theta_i (\mu_{\pi_i} - \mu_{\pi_*})^\top c+ O\left(  \frac{U(\theta)}{1-\gamma} \vee \sqrt{\frac{U(\theta)}{1-\gamma}} \right) \Bigg) \;.
\end{align*}
Further, the computational complexity of the algorithm is $O\big(poly(X)A\big)$.
\end{theorem}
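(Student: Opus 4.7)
The plan is to chain Theorem~\ref{thm:cvx-red} with a Bellman-flow perturbation step, then rewrite the resulting bound relative to the primal optimum $J(\pi_*)$. Theorem~\ref{thm:cvx-red}, after the $\eta$-optimization discussed immediately below its statement, already gives
\[
J(\pi_{\widehat\theta}) \le \min_{\theta\in\Theta}\Bigl(J(\pi_\theta) + O\bigl(\tfrac{U(\theta)}{1-\gamma} \vee \sqrt{\tfrac{U(\theta)}{1-\gamma}}\bigr)\Bigr),
\]
so it remains to replace $J(\pi_\theta)=c^\top\mu_{\pi_\theta}$ by the linear quantity $c^\top\xi_\theta=\sum_i\theta_i J(\pi_i)$ at an extra cost of $O(U(\theta)/(1-\gamma))$, and then to use $\sum_i\theta_i=1$ to rearrange.

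The key intermediate lemma is
\[
J(\pi_\theta) \le c^\top\xi_\theta + O\bigl(\tfrac{U(\theta)}{1-\gamma}\bigr) \qquad\text{for every } \theta\in\Theta.
\]
The crucial observation is that the affine constraint $\sum_i\theta_i=1$ baked into $\Theta$ forces $\xi_\theta$ to satisfy the state-level Bellman flow \emph{exactly}: $\sum_a\xi_\theta(x,a)=(1-\gamma)\alpha(x)+\gamma\sum_{x',a'}P(x\mid x',a')\xi_\theta(x',a')$. Let $\tilde\xi_\theta=[\xi_\theta]_{+}$, so $\|\tilde\xi_\theta-\xi_\theta\|_1=U(\theta)$ and $\pi_\theta$ is, by definition, the policy canonically induced by $\tilde\xi_\theta$, i.e.\ $\tilde\xi_\theta(x,a)=\tilde\nu(x)\pi_\theta(a\mid x)$ with $\tilde\nu(x)=\sum_a\tilde\xi_\theta(x,a)$. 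Substituting $\xi_\theta=\tilde\xi_\theta+[\xi_\theta]_{-}$ into the flow equation expresses $\tilde\nu-\nu_{\pi_\theta}$ as $(I-\gamma P_{\pi_\theta}^\top)^{-1}(u+\varepsilon)$, for vectors $u,\varepsilon$ of $\ell^1$ mass at most $U(\theta)$; since $(I-\gamma P_{\pi_\theta}^\top)^{-1}$ has $\ell^1$ operator norm $1/(1-\gamma)$, this yields $\|\mu_{\pi_\theta}-\xi_\theta\|_1\le O(U(\theta)/(1-\gamma))$, from which the lemma follows by H\"older with $\|c\|_\infty\le c_{\max}$.

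Plugging the lemma into Theorem~\ref{thm:cvx-red} and invoking the identity $c^\top\xi_\theta=\sum_i\theta_i\mu_{\pi_i}^\top c = J(\pi_*)+\sum_i\theta_i(\mu_{\pi_i}-\mu_{\pi_*})^\top c$ (valid because $\sum_i\theta_i=1$), and finally minimizing over $\theta\in\Theta$, yields the claimed inequality. As a concrete witness, $\theta=w^*$ is feasible whenever $S\ge 1$ and has $U(w^*)=0$, so the right-hand side is finite and small precisely when the base occupancies overlap in the sense that $\sum_i w^*_i(\mu_{\pi_i}-\mu_{\pi_*})^\top c$ is small; the $\min$ over $\Theta$ can only tighten this.

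For the $O(\mathrm{poly}(X)A)$ complexity, the $m$ base occupancies are computed once via the \citet{LL-2017} algorithm in time polynomial in $X$; after an $O(XAm)$ preprocessing that stores $M$, $c^\top M$, and an alias table for sampling from $\tfrac1m\sum_i\mu_{\pi_i}$, each of the $T=O(S^2\log(1/\delta)/\eta^2)$ subgradient iterations costs only $O(m)$. The main obstacle is the perturbation lemma: a crude estimate of $\|\mu_{\pi_\theta}-\xi_\theta\|_1$ would lose an extra $1/(1-\gamma)$, and it is precisely the exact feasibility of $\xi_\theta$ at the state-marginal level---granted by $\sum_i\theta_i=1$---that lets one charge the whole discrepancy to the negative $\ell^1$ mass $U(\theta)$ of $\xi_\theta$ and avoid that extra factor.
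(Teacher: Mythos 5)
Your proof is correct and follows essentially the same route as the paper: apply Theorem~\ref{thm:cvx-red}, bound $\abs{c^\top(\mu_{\pi_\theta}-\xi_\theta)}$ by $O\big(U(\theta)/(1-\gamma)\big)$, and use $\sum_i\theta_i=1$ to rewrite $c^\top\xi_\theta$ as $J(\pi_*)+\sum_i\theta_i(\mu_{\pi_i}-\mu_{\pi_*})^\top c$ before minimizing over $\theta$. The only notable difference is that the paper imports the key bound $\norm{\xi_\theta-\mu_{\pi_\theta}}_1\le 3U(\theta)/(1-\gamma)$ from Lemma~2 of \citet{ABM2014}, whereas you re-derive it from scratch via the Bellman-flow perturbation argument (correctly exploiting that the affine constraint $\sum_i\theta_i=1$ makes $\xi_\theta$ satisfy the state-level flow equations exactly), and you also justify the $O\big(poly(X)A\big)$ complexity claim, which the paper's proof leaves implicit.
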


Before proving Theorem~\ref{thm:main}, we need to prove the following lemma. 

\begin{lemma}
\label{lem:muimuj}
Let $\varepsilon = \max_{i,j\in [m]} \norm{\nu_{\pi_i} - \nu_{\pi_j}}_1$. Then, for any $i\in [m]$ and any policy $\pi_w$ in the primal space $\Pi$, we have
\begin{equation*}
\norm{\nu_{\pi_i} - \nu_{\pi_w}}_1 \le \frac{\epsilon (1+\gamma)}{1-\gamma} \;. 
\end{equation*}
\end{lemma}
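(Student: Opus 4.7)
The plan is to reduce the statement to a comparison between $\nu_{\pi_w}$ and the convex combination $\bar{\nu} = \sum_{j=1}^m w_j\, \nu_{\pi_j}$ of the base occupancy measures. This is a natural surrogate because (i) it lies within $\epsilon$ of each $\nu_{\pi_i}$ by a single application of the triangle inequality and the definition of $\epsilon$, and (ii) unlike $\nu_{\pi_w}$ itself, it is linear in $w$, so discrepancies are easy to bookkeep using the Bellman-flow equation. The key subtlety is that $\nu_{\pi_w} \neq \bar\nu$ in general, because executing the mixture policy $\pi_w$ (which re-samples a base policy at every step) is not the same as executing a base policy sampled once from $w$.

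The main work is therefore to bound $\norm{\bar\nu - \nu_{\pi_w}}_1$. First, I would invoke the stationary equation $\nu_\pi^\top = (1-\gamma)\alpha^\top + \gamma \nu_\pi^\top P_\pi$ for $\pi = \pi_w$ and for each base $\pi_j$, take a convex combination of the latter with weights $w_j$, and subtract. Using $P_{\pi_w} = \sum_j w_j P_{\pi_j}$, the $(1-\gamma)\alpha^\top$ terms cancel and one obtains
\begin{equation*}
(\nu_{\pi_w} - \bar\nu)^\top \;=\; \gamma \sum_{j=1}^m w_j (\nu_{\pi_w} - \nu_{\pi_j})^\top P_{\pi_j}.
\end{equation*}
Taking $\ell^1$-norms and using the standard fact that $\norm{v^\top P}_1 \le \norm{v}_1$ for any row-stochastic $P$, this yields $\norm{\nu_{\pi_w} - \bar\nu}_1 \le \gamma \sum_j w_j \norm{\nu_{\pi_w} - \nu_{\pi_j}}_1$.

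Next, I would close the recursion by another triangle inequality: $\norm{\nu_{\pi_w} - \nu_{\pi_j}}_1 \le \norm{\nu_{\pi_w} - \bar\nu}_1 + \norm{\bar\nu - \nu_{\pi_j}}_1$, and bound the latter term by $\epsilon$ using convexity of the norm and the definition of $\epsilon$. Plugging in and solving for $\norm{\nu_{\pi_w} - \bar\nu}_1$ gives a contraction of the form $(1-\gamma)\norm{\nu_{\pi_w} - \bar\nu}_1 \le \gamma \epsilon$, hence $\norm{\nu_{\pi_w} - \bar\nu}_1 \le \gamma\epsilon/(1-\gamma)$. Combining with $\norm{\bar\nu - \nu_{\pi_i}}_1 \le \epsilon$ via one last triangle inequality gives the claimed $\epsilon(1+\gamma)/(1-\gamma)$ bound (in fact a slightly sharper $\epsilon/(1-\gamma)$, but the stated constant is a valid relaxation).

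The main obstacle is really the second step: correctly handling the non-linearity of $\pi_w \mapsto \nu_{\pi_w}$ so that the discrepancy $\nu_{\pi_w} - \bar\nu$ can be expressed as an $\ell^1$-contraction of differences of base-policy occupancies. Once the Bellman-flow subtraction is done cleanly and the row-stochasticity of each $P_{\pi_j}$ is used to push the norm through, the self-bounding step is routine.
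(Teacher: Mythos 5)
Your proof is correct, and it takes a genuinely different route from the paper's. The paper works directly with resolvents: it writes $\nu_{\pi_i}-\nu_{\pi_k}=v_{i,k}$, multiplies through by $(I-\gamma P_{\pi_k})$, averages over $k$ with weights $w_k$, and arrives at the closed-form identity $\nu_{\pi_i}-\nu_{\pi_w}=\sum_k w_k\, v_{i,k}^\top (I-\gamma P_{\pi_k})(I-\gamma P_{\pi_w})^{-1}$, which it then bounds by separately controlling $\norm{(I-\gamma P_{\pi_k})^\top}_1\le 1+\gamma$ and $\norm{(I-\gamma P_{\pi_w})^{-1}}_1\le 1/(1-\gamma)$ via the Neumann series; the product of these two factors is exactly the $(1+\gamma)/(1-\gamma)$ in the statement. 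You instead introduce the surrogate $\bar\nu=\sum_j w_j\nu_{\pi_j}$, subtract Bellman-flow equations to get $(\nu_{\pi_w}-\bar\nu)^\top=\gamma\sum_j w_j(\nu_{\pi_w}-\nu_{\pi_j})^\top P_{\pi_j}$, and close a self-bounding recursion using only $\norm{v^\top P}_1\le\norm{v}_1$. Both arguments rest on the same structural fact $P_{\pi_w}=\sum_k w_k P_{\pi_k}$, but your decomposition involves differences between the mixture and the base policies (requiring the fixed-point step, which is legitimate since all quantities are bounded), whereas the paper's involves only pairwise differences of base policies (giving a one-shot identity). Your version avoids the explicit resolvent expansion and yields the sharper constant $\epsilon/(1-\gamma)$ in place of $\epsilon(1+\gamma)/(1-\gamma)$, which you correctly note dominates the stated bound.
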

\begin{proof}
For any $i,j\in [m]$, there exists a vector $v_{i,j}$ with $\norm{v_{i,j}}_1 \le \epsilon$ such that 
\begin{equation}
\label{eq:v-eq0}
\nu_{\pi_i}-\nu_{\pi_j}=v_{i,j}.
\end{equation}
We may rewrite Eq.~\ref{eq:v-eq0} as
\begin{equation*}
\alpha^\top (I - \gamma P_{\pi_i})^{-1} = \alpha^\top (I - \gamma P_{\pi_j})^{-1} + v_{i,j}^\top \;,
\end{equation*}
and further as 
\begin{equation}
\label{eq:v-eq1}
\alpha^\top (I - \gamma P_{\pi_i})^{-1} (I - \gamma P_{\pi_j}) = \alpha^\top + v_{i,j}^\top (I - \gamma P_{\pi_j}) \;.
\end{equation}
Now for a policy $\pi_w\in\Pi$, corresponding to the weight vector $w\in\Delta_{[m]}$, we may write
\begin{align}
\alpha^\top (I - \gamma P_{\pi_i})^{-1} \Big( I - \gamma \sum_{k=1}^m w_k P_{\pi_k} \Big) &=\sum_{k=1}^m w_k \alpha^\top (I - \gamma P_{\pi_i})^{-1}  (I - \gamma P_{\pi_k}) \nonumber \\
&\stackrel{\text{(a)}}{=} \alpha^\top \sum_{k=1}^m w_k + \sum_{k=1}^m w_k v_{i,k}^\top (I - \gamma P_{\pi_k}) \nonumber \\
\label{eq:v-eq2}
&\stackrel{\text{(b)}}{=} \alpha^\top  + \sum_{k=1}^m w_k v_{i,k}^\top (I - \gamma P_{\pi_k}) \;,
\end{align}
where {\bf (a)} is from Eq.~\ref{eq:v-eq1} and {\bf (b)} is from the fact that $w\in\Delta_{[m]}$, and thus, $\sum_{k=1}^mw_k=1$. From Eq.~\ref{eq:v-eq2}, we have
\begin{align}
\label{eq:v-eq3}
\alpha^\top &(I - \gamma P_{\pi_i})^{-1} = \alpha^\top \Big( I - \gamma \sum_{k=1}^m w_k P_{\pi_k} \Big)^{-1}+ \sum_{k=1}^m w_k v_{i,k}^\top (I - \gamma P_{\pi_k}) \Big( I - \gamma \sum_{l=1}^m w_l P_{\pi_l} \Big)^{-1}.
\end{align}
Since $P_{\pi_w}=\sum_{k=1}^mw_kP_{\pi_k}$, we may rewrite Eq.~\ref{eq:v-eq3} as
\begin{align*}
\nu_{\pi_i} - \nu_{\pi_w} &= \alpha^\top (I - \gamma P_{\pi_i})^{-1} - \alpha^\top (I - \gamma P_{\pi_w})^{-1}\\ 
&= \sum_{k=1}^m w_k v_{i,k}^\top (I - \gamma P_{\pi_k}) \Big( I - \gamma \sum_{l=1}^m w_l P_{\pi_l} \Big)^{-1}.
\end{align*}
Let
\begin{equation*}
\epsilon' = \max_{i\in [m]} \norm{\sum_{k=1}^m w_k v_{i,k}^\top (I - \gamma P_{\pi_k}) \Big( I - \gamma \sum_{l=1}^m w_l P_{\pi_l} \Big)^{-1}}_1\;,
\end{equation*}
$z_k = (I - \gamma P_{\pi_k})^\top v_{i,k}$, $\;Q=\sum_{l=1}^m w_l P_{\pi_l}$, and $M^{-1} = I - \gamma Q$. We may write 
\begin{align*}
\norm{\nu_{\pi_i} - \nu_{\pi_w}}_1 &\leq \epsilon'\\ 
&\le \norm{M^\top \sum_{k=1}^m w_k z_k}_1 \\
&\le \norm{M^\top}_1 \norm{\sum_{k=1}^m w_k z_k}_1 \\
&\le \underbrace{\norm{I + \gamma Q^\top + \gamma^2 Q^{2\top}+\dots}_1}_{\leq(1+\gamma)}\sum_{k=1}^m w_k\overbrace{\norm{(I - \gamma P_{\pi_k})^\top}_1}^{\leq 1/(1-\gamma)} \overbrace{\norm{v_{i,k}}_1}^{\leq\epsilon} \\
&\le \frac{\epsilon (1+\gamma)}{1-\gamma} \;.
\end{align*}
This concludes the proof. 
\end{proof}

Now we are ready to prove Theorem~\ref{thm:main}. 
\begin{proof}[Proof of Theorem~\ref{thm:main}]
From Lemma~2 of \citet{ABM2014}, \todoy{refer to the journal version for the discounted result} for any $\theta\in\Theta$, we have
\begin{equation*}
\norm{\xi_\theta - \mu_{\pi_\theta}}_1 \le \frac{3 U(\theta)}{1-\gamma} \;.
\end{equation*}
From Lemma~\ref{lem:muimuj}, we have $\norm{\nu_{\pi_i} - \nu_{\pi_*}}_1 \le \epsilon'$, for any $i\in [m]$ and $\pi^*=\pi_{w^*}$. Thus, for any $\theta\in\Theta$, we may write
\begin{align*}
\alpha^\top J_{\pi_\theta} &= \alpha^\top J_{\pi_*} + \mu_{\pi_\theta}^\top c - \mu_{\pi_*}^\top c \\
&= \alpha^\top J_{\pi_*} + \xi_\theta^\top c - \mu_{\pi_*}^\top c + \mu_{\pi_\theta}^\top c - \xi_\theta^\top c    \\
&\le \alpha^\top J_{\pi_*} + (\xi_\theta-\mu_{\pi_*})^\top c + \frac{3 U(\theta)}{1-\gamma} \\
&= \alpha^\top J_{\pi_*} + \sum_i \theta_i (\mu_{\pi_i}-\mu_{\pi_*})^\top c + \frac{3 U(\theta)}{1-\gamma} \;.
\end{align*}
Let $b_i = (\mu_i-\mu_{\pi_*})^\top c$. We get that
\[
\alpha^\top J_{\pi_\theta} \le \alpha^\top J_{\pi_*} + \theta^\top b + \frac{3 U(\theta)}{1-\gamma} \;.
\]
Thus, by Theorem~\ref{thm:cvx-red},
\begin{align*}
\alpha^\top J_{\pi_{\widehat \theta}} &\le \alpha^\top J_{\pi_*} + \min_{\theta\in\Theta} \Bigg( \theta^\top b+ O\left( \max\left( \frac{U(\theta)}{1-\gamma}, \sqrt{\frac{U(\theta)}{1-\gamma}} \right) \right) \Bigg) \;. 
\end{align*}

\end{proof}


Let $\lambda$ be such that for any $i,j\in [m]$ and any $(x,a)\in\cX\times\cA$,
\beq
\label{eq:large-overlap}
\frac{\lambda}{1+\lambda} \mu_{\pi_j}(x,a) \le \mu_{\pi_i}(x,a) \le \frac{1+\lambda}{\lambda} \mu_{\pi_j}(x,a) \;.
\eeq
A large value of $\lambda$ indicates large overlap among occupancy measures. Let $\varepsilon = \norm{\mu_{\pi_1} - \mu_{\pi_*}}_1$. Parameter $\lambda$ provides an $\ell^\infty$ error and $\varepsilon$ provides an $\ell^1$ error bound. We can obtain an easy but non-trivial bound under the condition \eqref{eq:large-overlap} as follows. Let $\pi_1$ be the policy with smallest value and $\pi_m$ be the policy with largest value. Choose $\theta_1 = 1+\lambda$, $\theta_m=-\lambda$, and all other elements are zero. Then
\begin{align*}
U(\theta) &= \sum_{(x,a)} \abs{[\xi_{\theta} (x,a)]_{-}} \\
&= \sum_{(x,a)} \abs{[(1+\lambda) \mu_{\pi_1}(x,a) - \lambda \mu_{\pi_m}(x,a)]_{-}} \;.
\end{align*}
For each term to be negative, we must have
\[
\mu_{\pi_1}(x,a) \le \frac{\lambda}{1+\lambda} \mu_{\pi_m}(x,a)\,,
\]
which contradicts our assumption. Thus $U(\theta)=0$. We also get that
\begin{align*}
\theta^\top b &= (1+\lambda) (\mu_{\pi_1} - \mu_{\pi_*})^\top c - \lambda (\mu_{\pi_m} - \mu_{\pi_*})^\top c\\
&= (\mu_{\pi_1} - \mu_{\pi_*})^\top c - \lambda (\alpha^\top J_{\pi_m} - \alpha^\top J_{\pi_1}) \;.
\end{align*}
Thus,
\beq
\label{eq:bound1}
\alpha^\top J_{\pi_{\widehat\theta}} \le \alpha^\top J_{\pi_*} + \varepsilon - \lambda \alpha^\top (J_{\pi_m} - J_{\pi_1}) \;.
\eeq
Let's compare the above result to a simple bound:
\beq
\label{eq:bound2}
\alpha^\top J_{\pi_1} \le \alpha^\top J_{\pi_*} + \varepsilon \;.
\eeq
The term $\lambda \alpha^\top (J_{\pi_m} - J_{\pi_1})$ in \eqref{eq:bound1} shows the amount of improvement compared to the bound in \eqref{eq:bound2}. This term is  positive given that $\pi_m$ has a larger value than $\pi_1$.

\section{Experiments}
In this section we compare the results of combining policies in the policy space ($\theta$) and occupancy measure space ($w$), where the base policies have overlapping occupancy measures. We consider two different queuing problems. The results show that solving the problem in $\theta$ space results in a better optimal policy.

\subsection{Queuing Problem: Single Queue}
Consider a queue  of length L.  We denote the state of the system at time $t$ by  $x_t$, which shows  the number of jobs in the queue.  This problem has been studied before in \cite{de2003linear}. Jobs arrive at the queue with rate $p$. Action at each time, $a_t$, is chosen from the finite set  $\{0.1625,0.325,0.4875,0.65 \}$,  which  shows the service rate or departure  probability.  The transition function of the system is then defined as:
\begin{equation}
      x_{t+1}=\left\{
                \begin{array}{ll}
                  x_t - 1   	\hspace{1cm} \text{with probability:} \hspace{.4cm}  a_t\\
                  x_t + 1 		\hspace{1cm} \text{with probability:} \hspace{.4cm}  p\\
                  x_t           \hspace{1.6cm} \text{otherwise}        
                \end{array}
              \right.
\end{equation}
The system goes  from state $0$ to $1$ with probability $p$ and stays  in $0$ with probability $1-p$. Also, transition from state $L$ to $L-1$ has  probability $a(L)$ and the system stays in $L$ with probability $1-a(L)$. The cost incurred by being in state $x$ and taking action $a$ is given by: $c(x,a) = x^2 + 2500a^2$.

Consider a queue with $L =99$ and two  base policies.  The two policies, independent of the states,  have the following distributions over the set of actions:
$\pi_1 =  [0 , 0, 0.50, 0.50]$ and $\pi_2 = [0 ,0.1, 0.45, 0.45]$. Consider two scenarios. 1) Mixing the original policies in the space of $w$. 2) Building a policy based on combining the corresponding occupancy measures (optimization in the space of $\theta$). We intentionally choose two similar policies with high overlap in occupancy measure to show the effectiveness of finding a good mixture in the space of $\theta$ in this regime, while the gain in combining the original policies is not promising. 

Figure \ref{fig:experiment1} shows the results of optimization in the two spaces. The costs associated with $\pi_1$ and $\pi_2$ are $J({\pi_1}) = 831.91$ and $J({\pi_2}) = 777.36$. Suppose $\pi_w = w \pi_1 + (1-w) \pi_2$ is our mixture policy. Then for $0 \leq w \leq 1$, $J(\pi_w)$ is a monotonically increasing function of $w$. In other words, there is no gain in mixing the policies when $w$ is between $0$ and $1$. For this experiment we let $w$ to take values outside this interval to examine the lowest possible cost. The best mixing weight is $w = -5.49$ and the associated cost is $J(\pi_w) = 533.60$. However, in the space of occupancy measures we can build a better policy. Suppose $\xi_{\theta} = \theta \mu_1 + (1-\theta) \mu_2$. Then the cost associated with the policy that is built based on $\xi_{\theta}$ decreases monotonically as we decrease $\theta$ and it saturates at almost $J(\pi_{\theta}) = 447$, 
\todo{Nikos: the x-label is not shown in the theta plot}
which is much lower than the cost of optimal policy mixing in the space of $w$.
\begin{figure}[!h]
    \centering
    \subfloat[]{{\includegraphics[trim = 10mm 50mm 6mm 50mm,height=4.5cm]{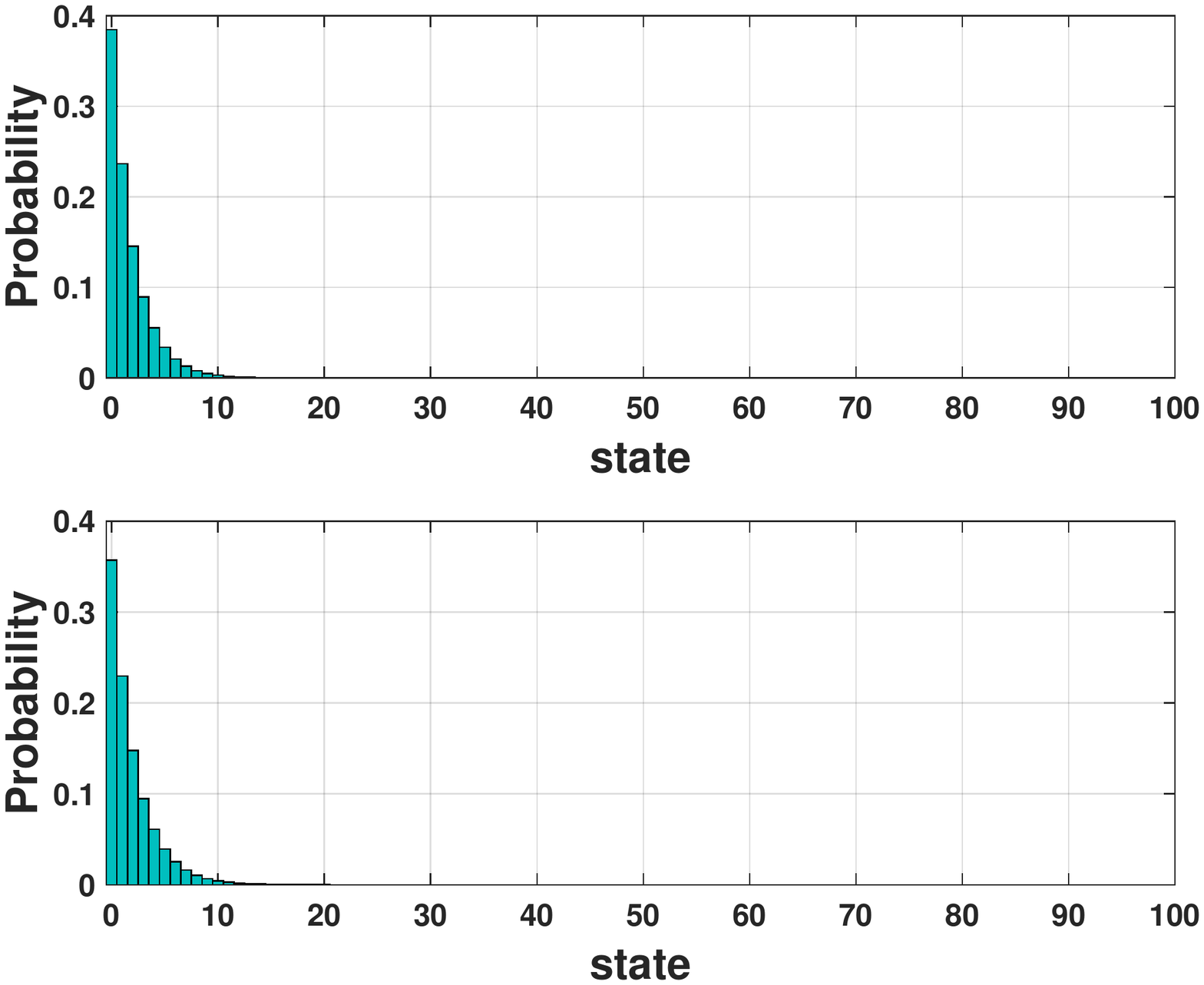} 	}}
	\subfloat[]{{\includegraphics[trim = 10mm 62mm 6mm 62mm,height=4.5cm]{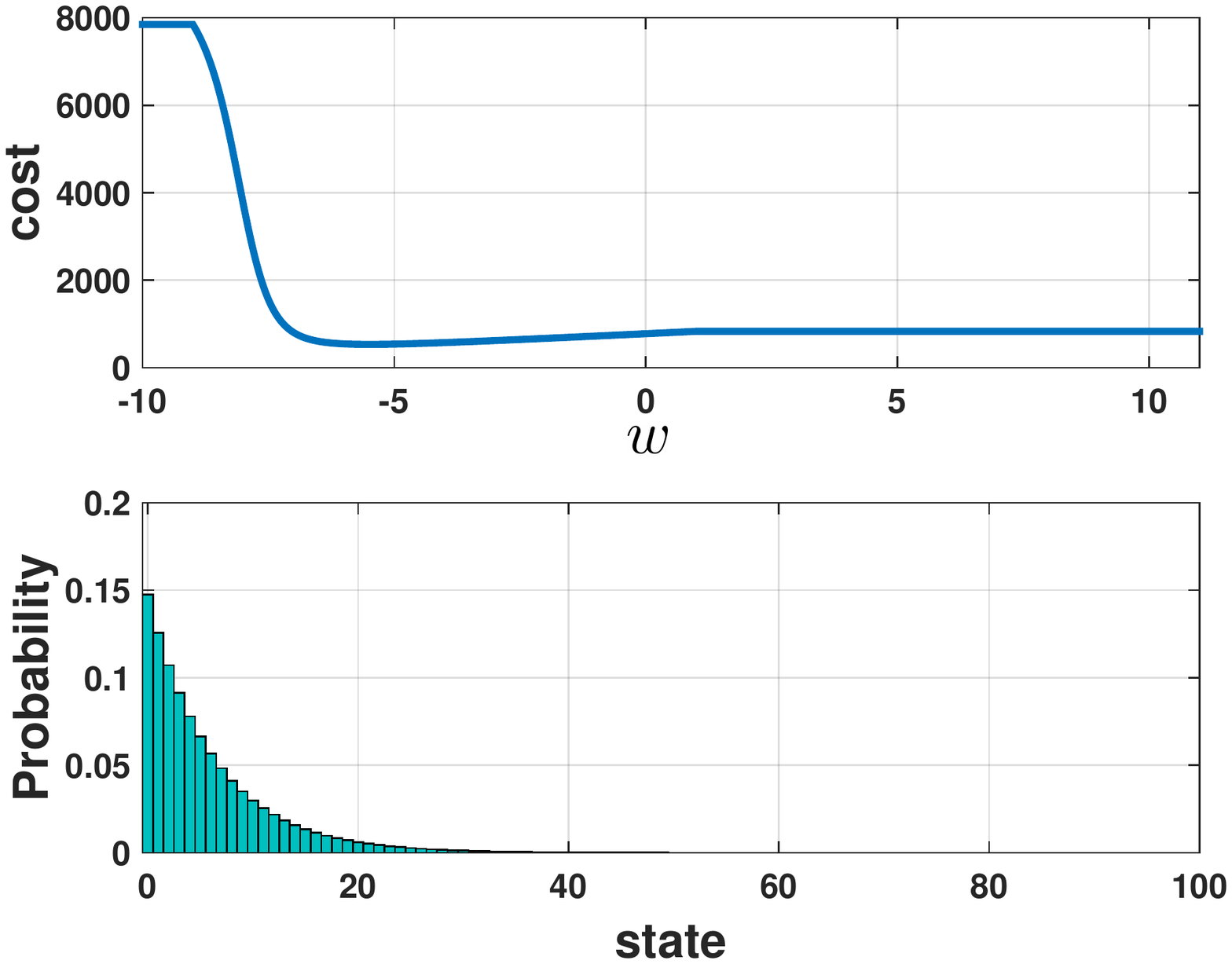} 	}} 
    \subfloat[]{{\includegraphics[trim = 10mm 50mm 6mm 50mm,height=4.5cm]{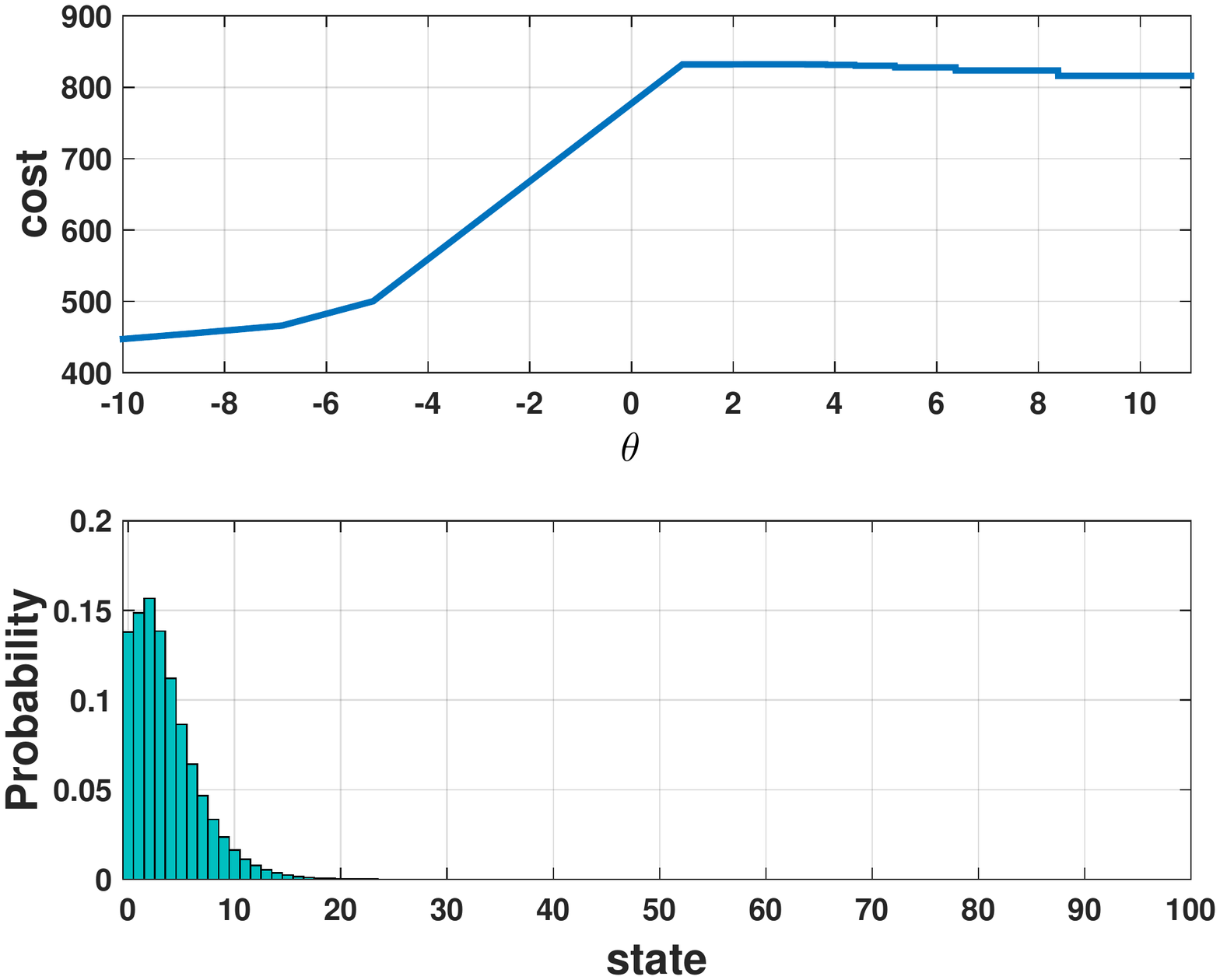} 	}} 
    \vspace{0cm}
    \caption{(a) Stationary distribution of the two initial policies. (b) Top: Cost of the mixture policy versus $w$. Bottom: Stationary distribution of the best mixture policy found in the space of $w$. (c) Top: Cost of the mixture policy versus $\theta$. Bottom:  Stationary distribution of the best mixture policy found in the space of $\theta$.} \label{fig:experiment1}%
\end{figure}
\subsection{Queuing Problem: Four Queues}
Consider a system with four queues, each has capacity $L=9$, shown in figure \ref{fig:fourqs}. This problem has been studied in several works \cite{chen1999value,kumar1990dynamic,de2003linear}. There are two servers in the system. Server $1$ serves queue $1$ and queue $4$ with rates $r_1$ and $r_4$, respectively, and server $2$ serves queue $2$ and queue $3$ with rates $r_2$ and $r_3$, respectively. Each server serves only one of its associated queues at each time. Jobs arrive at queue $1$ and queue $3$ with rate $\lambda$.  A job leaves the systems after being served at either queue $1$ and queue $2$ or queue $3$ and queue $4$. State of the system is denoted by a four dimensional vector $[x_1,x_2,x_3,x_4]$, where $x_i$ represents the number of jobs in queue $i$ at each time. A controller can choose a four-dimensional action from $\{0,1\}^4$ such that $a_1+a_4 \leq 1$ and $a_2 + a_3 \leq 1$. The cost at each time is equal to the total number of jobs in the system: $c(x) = \sum_{i=1}^4 x_i$.

\begin{figure}[!h]
    \centering
\includegraphics[trim = 0mm 0mm 0mm 0mm,width=8cm]{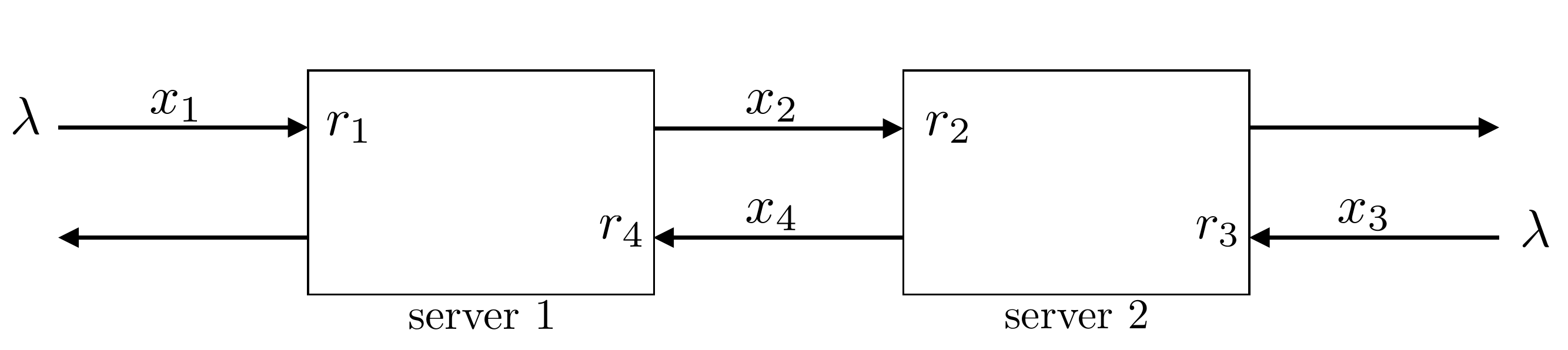} 
    \caption{A system with four queues and two servers} \label{fig:fourqs}%
\end{figure}

Assume $r_1 = r_2 = 0.12$, $r_3 = r_4 = 0.28$, and $\lambda = 0.08$. 
We choose our base policies from a family of policies for which server $i$ serves its longer queue with probability $p_i$ and its shorter queue with probability $1-p_i$ , for $i = \{1,2\}$. Five base policies and their associated costs are shown in Table \ref{tbl:base_policies}. 

Solving this problem in the space of policies using policy gradient will result in the optimal weight vector $w^* = [0 , 0 , 0 ,  1 , 0]$, i.e. giving all the weight to the policy with the lowest cost. Therefore the cost of the mixture policy will be $J(\pi^*) = 13.6525$. 

Interestingly, if we solve the problem in the dual space (space of stationary distributions) using policy gradient and Eq. \ref{eq:dual-policy}, after 200 iterations, the optimal resulting policy will have cost $J(\pi_{\theta}) = 12.84$ with $\theta = [-0.32, -0.68, -0.4, 2.16, 0.24]$ (note that based on the definition of $\Theta$, this vector can have negative values). However, this method is not computationally efficient. Using our stochastic subgradient method, we can approximate the cost very fast and efficient. In fact, the resulting policy will have cost $J(\pi_{\theta}) = 13.00$  with $\theta = [-0.23, -1.28,  0.09,   1.54, 0.88]$. Figure \ref{fig:theta4q} shows the difference between the costs of policy gradient and the stochastic subgradient method at each iteration. 

\begin{figure}[!h]
\centering
\includegraphics[width = 7cm]{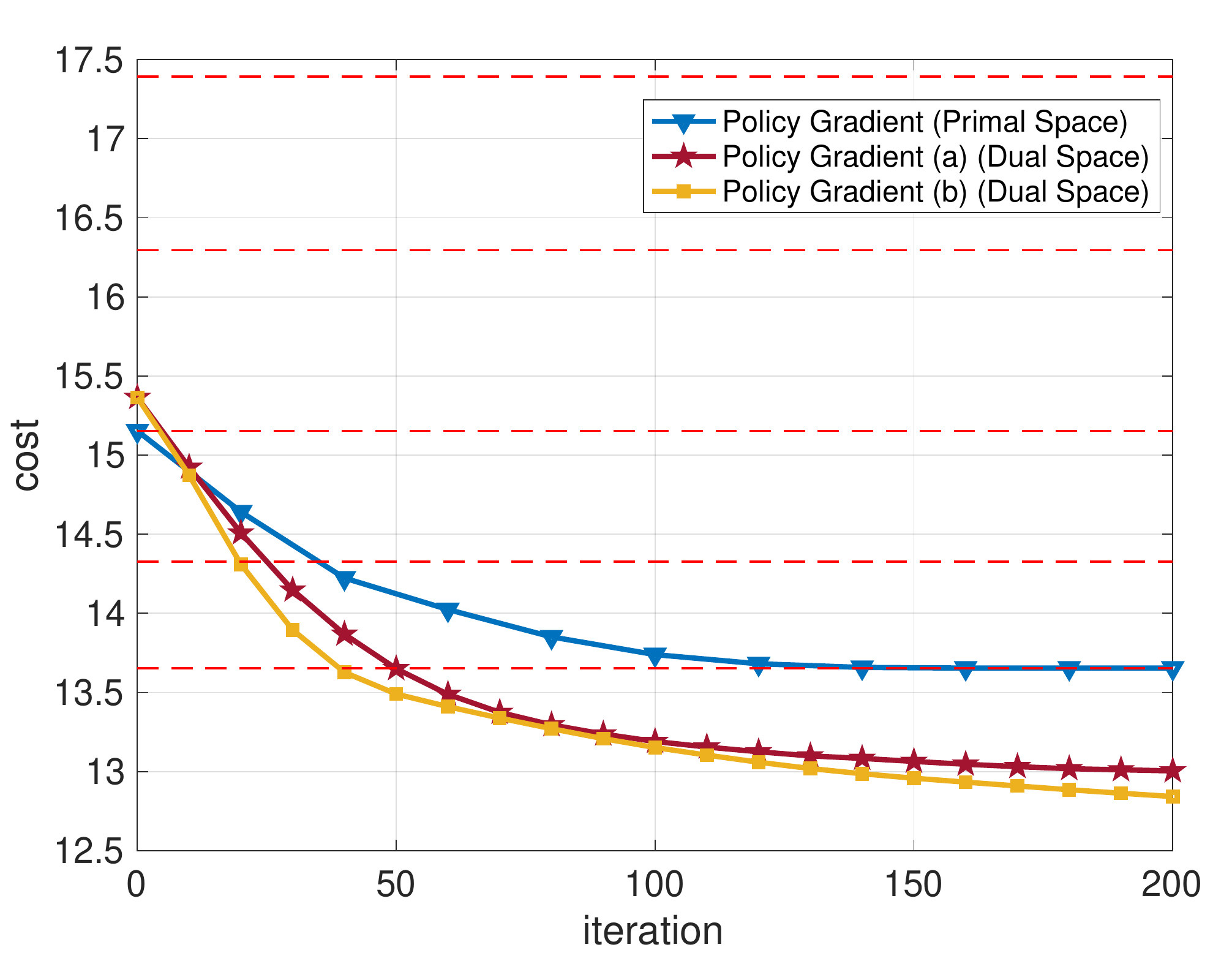}
\caption{cost per iteration (step) for the primal and the dual space. The policy gradient (b) for the dual space is the method described in algorithm \ref{alg:SGD}. Horizontal dashed lines are the costs of the base policies.}
\label{fig:theta4q}
\end{figure}

\begin{table}[!h]
\caption{Base Policies for four queue problem}
\vspace{-.25cm}
\small
\begin{center}
\begin{tabular}{c| c | c | c}
 & $p_1$ & $p_2$ & cost \\  \hline \hline
1	&0.9	&0.9	&16.2950  \\ 
 \hline
2	&0.9	&0.7	&17.3926 \\ 
 \hline
3	&0.8	&0.8	&15.1535 \\ 
 \hline
4	&0.7	&0.9	&13.6525 \\ 
 \hline
5	&0.7	&0.7	&14.3266 \\ 
\end{tabular}
\end{center}
\vspace{-.5cm}
\label{tbl:base_policies}
\end{table}

\begin{figure*}[!t]
\centering
\includegraphics[width = 11cm]{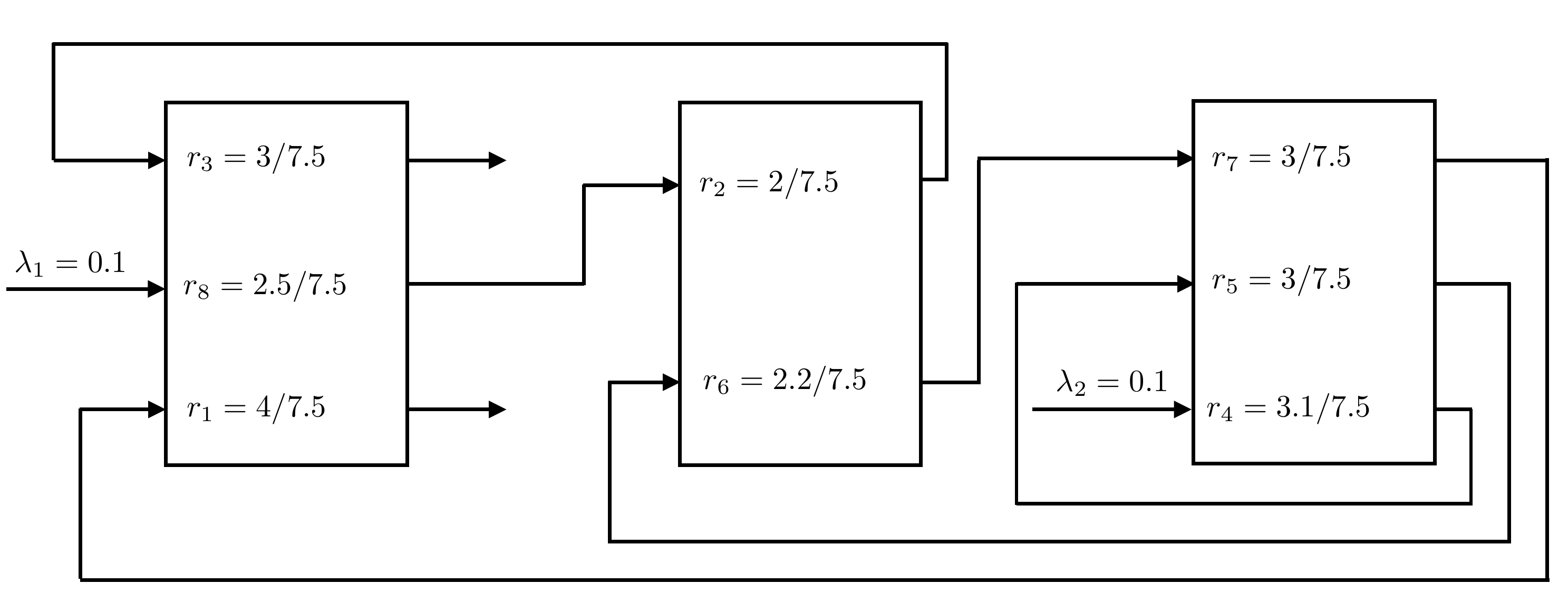}
\caption{8-queue Problem}
\label{fig:8q_sys}
\end{figure*}

\subsection{Queuing Problem: Eight Queues}
We consider a queuing problem with eight queues similar to the problem in \cite{de2003linear} with slightly different parameters. Consider a system with three servers and eight queues. Similar to  4-queue problem, each server  is responsible of serving multiple queues. There are two pipelines. Jobs in the first pipeline enter the system from the first queue by rate $\lambda_1$ and exit the system from the third queue. Jobs in the second pipeline enter the system from the forth queue by rate $\lambda_2$ and exit the system from the eighth queue. Fig. \ref{fig:8q_sys} demonstrates the system. The service rate is denoted by $r_i$, $i \in \{1,2,...,8\}$. There is no limit on the length of queues. State of the system is determined by an 8-dimensional vector that shows the number of jobs in each queue. Actions are also 8-dimensional binary vectors, where each dimension shows if the associated queue is served or not. The cost at each time step is equal to the total number of jobs in the system. Each server can serve only one queue at each time.  We choose three base policies from a family of policies that is described below. 

\textbf{Family of base policies:} The base policies are parameterized by three components: $p_1$, $p_2$, and $p_3$, which are associated to each server and satisfy the following properties:

\begin{itemize}
\item{If all of the queues associated to server $i$ are empty the server is idle. }
\item{If only one of the queues for server $i$ is non-empty, that queue is served with probability $p_i$.}
\item{If server $i$ has more than one non-empty queue, it serves the longest queue with probability $p_i$ and the rest of the non-empty queues with probability $(1-p_i)/(n_{q}-1)$, where $n_q$ is the  number of non-empty queues.}
\end{itemize} 

The three base policies and their costs are shown in Table \ref{tbl:base_policies_8}. The cost is computed by running each policy for 10,000,000 iterations, starting from empty system, and averaging over number of jobs at each iteration.

\begin{table}[!h]
\caption{Base Policies for eight queue problem}
\vspace{-.25cm}
\small
\begin{center}
\begin{tabular}{c| c | c | c | c }
 & $p_1$ & $p_2$ & $p_3$ & cost \\  \hline  \hline
1	&0.8		&0.5 	&0.5	& 21.78$\pm$ 0.16 \\ 
 \hline
2	&0.5		&0.8 	&0.5	&  22.47$\pm$0.08\\ 
 \hline
3	&0.5		&0.5	&0.8	& 22.98 $\pm$ 0.12\\
\end{tabular}
\end{center}
\vspace{-.5cm}
\label{tbl:base_policies_8}
\end{table}

Below we discuss about combining the policies in the primal and dual space.

\textbf{Solving the problem in the policy space:} Compared to the previous problems, the number of states in this domain is not limited. The cost of each policy should be computed by running the policy for a long time, e.g. 10,0000,000 iterations. We solve the problem in the primal space using policy gradient, and specifically a finite differences method. We start from a random initial wight. At each iteration, we slightly perturb the weight in different directions, compute the cost for each perturbation, and approximate the gradient surface. The weight is then updated in the direction of the gradient. Since computing the cost is computationally expensive, finding the optimum mixing weight in this space is very tedious.  After 50 iterations the weight converges to $w^*=[ 0.3519 , 0.3273 , 0.3209]$, which corresponds to $J(\pi^*)= 19.14$.

\textbf{Solving the problem in the space of occupancy measures:} To solve the problem in this space we only need to run the base policies once and obtain the stationary distributions. From there, we do not need to run any other policy and the optimal $\theta$ is obtained using the described algorithm. The optimal value of $\theta$ for this problem is: $\theta = [1.1246 , -0.1143 , -0.0102]$ and the cost induced is $J(\pi_{\theta})=  20.59$. In fact, by mixing the policies in this space we obtain a lower cost compared to the base policies. Although, this cost is slightly worse than the cost of the policy obtained the primal space, the gain in computation is significant.   

\begin{figure}[!h]
\centering
\includegraphics[width = 7cm]{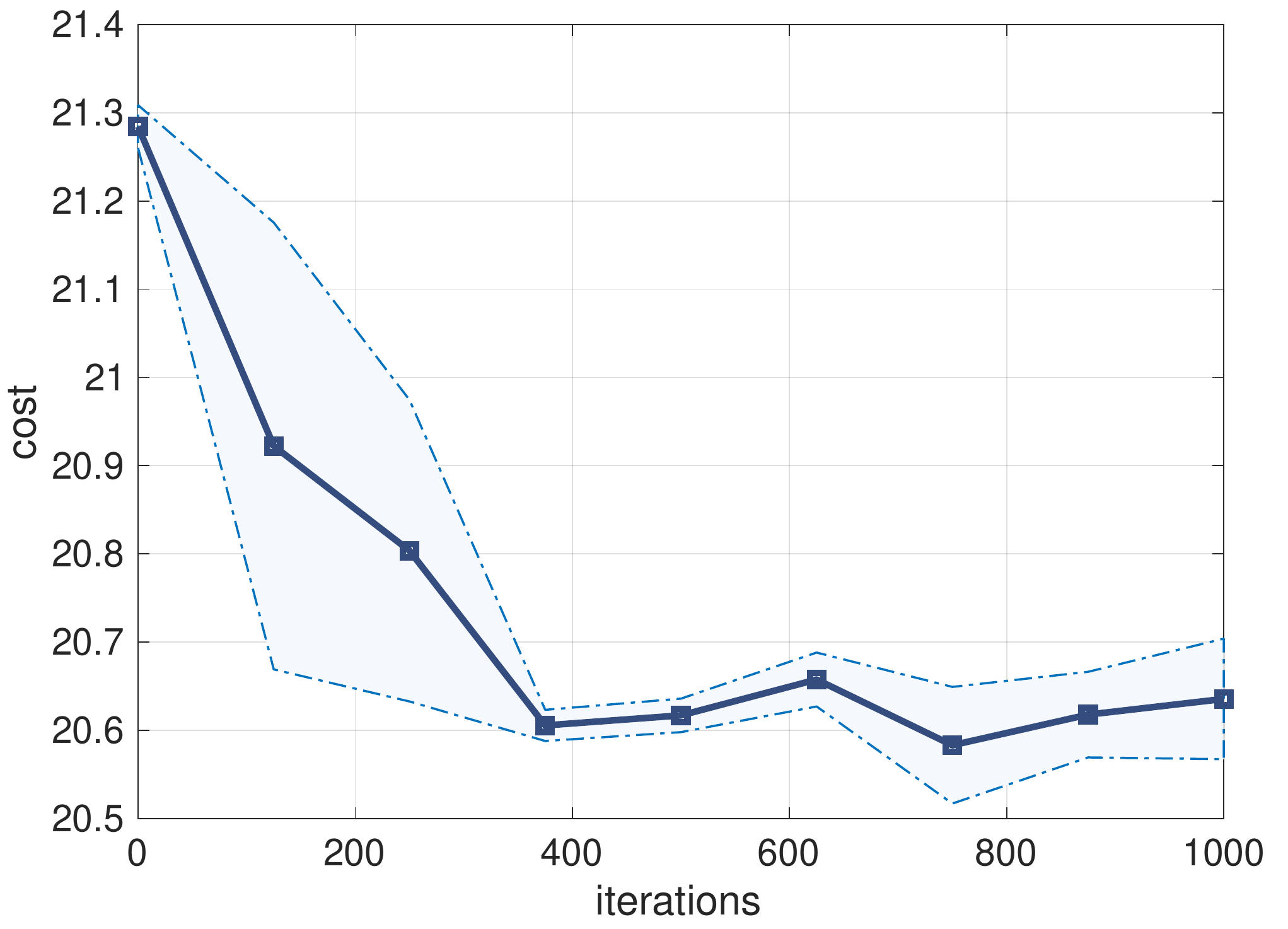}
\caption{Mean and standard deviation of the cost per iteration (step) in dual space in 8 queue problem.}
\label{fig:theta4q}
\end{figure}

\newpage
\bibliography{biblio}

\begin{thebibliography}{19}
\providecommand{\natexlab}[1]{#1}
\providecommand{\url}[1]{\texttt{#1}}
\expandafter\ifx\csname urlstyle\endcsname\relax
  \providecommand{\doi}[1]{doi: #1}\else
  \providecommand{\doi}{doi: \begingroup \urlstyle{rm}\Url}\fi

\bibitem[Abbasi-Yadkori et~al.(2014)Abbasi-Yadkori, Bartlett, and
  Malek]{ABM2014}
Y.~Abbasi-Yadkori, P.~Bartlett, and A.~Malek.
\newblock Linear programming for large-scale {M}arkov decision problems.
\newblock In \emph{International Conference on Machine Learning (ICML)}, 2014.

\bibitem[Baxter and Bartlett(2001)]{Baxter01IP}
J.~Baxter and P.~Bartlett.
\newblock Infinite-horizon policy-gradient estimation.
\newblock \emph{Journal of Artificial Intelligence Research}, 15:\penalty0
  319--350, 2001.

\bibitem[Bhatnagar et~al.(2009)Bhatnagar, Sutton, Ghavamzadeh, and
  Lee]{Bhatnagar09NA}
S.~Bhatnagar, R.~Sutton, M.~Ghavamzadeh, and M.~Lee.
\newblock Natural actor-critic algorithms.
\newblock \emph{Automatica}, 45\penalty0 (11):\penalty0 2471--2482, 2009.

\bibitem[Chen and Meyn(1999)]{chen1999value}
Rong-Rong Chen and Sean Meyn.
\newblock Value iteration and optimization of multiclass queueing networks.
\newblock \emph{Queueing Systems}, 32\penalty0 (1-3):\penalty0 65--97, 1999.

\bibitem[Cohen et~al.(2017)Cohen, Kelner, Peebles, Peng, Rao, Sidford, and
  Vladu]{LL-2017}
Michael~B. Cohen, Jonathan Kelner, John Peebles, Richard Peng, Anup~B Rao,
  Aaron Sidford, and Adrian Vladu.
\newblock Almost-linear-time algorithms for {M}arkov chains and new spectral
  primitives for directed graphs.
\newblock In \emph{STOC}, 2017.

\bibitem[de~Farias and Van~Roy(2003)]{de2003linear}
Daniela~Pucci de~Farias and Benjamin Van~Roy.
\newblock The linear programming approach to approximate dynamic programming.
\newblock \emph{Operations research}, 51\penalty0 (6):\penalty0 850--865, 2003.

\bibitem[Garey and Johnson(1979)]{Garey79}
M.~R. Garey and D.~S. Johnson.
\newblock \emph{Computers and Intractability: A Guide to the Theory of
  {NP}-Completeness}.
\newblock W. H. Freeman \& Co., New York, NY, USA, 1979.

\bibitem[H{\aa}stad(1999)]{Haastad99}
J.~H{\aa}stad.
\newblock Clique is hard to approximate within $n^{1- \varepsilon}$.
\newblock \emph{Acta Mathematica}, 182\penalty0 (1):\penalty0 105--142, 1999.

\bibitem[Konda and Tsitsiklis(2000)]{Konda00AA}
V.~Konda and J.~Tsitsiklis.
\newblock {A}ctor-{C}ritic algorithms.
\newblock In \emph{Proceedings of Advances in Neural Information Processing
  Systems 12}, pages 1008--1014, 2000.

\bibitem[Kumar and Seidman(1990)]{kumar1990dynamic}
PR~Kumar and Thomas~I Seidman.
\newblock Dynamic instabilities and stabilization methods in distributed
  real-time scheduling of manufacturing systems.
\newblock \emph{IEEE Transactions on Automatic Control}, 35\penalty0
  (3):\penalty0 289--298, 1990.

\bibitem[Motzkin and Straus(1965)]{Motzkin65}
T.~S. Motzkin and E.~G. Straus.
\newblock Maxima for graphs and a new proof of a theorem of {T}ur\'an.
\newblock \emph{Canadian Journal of Mathematics}, 17:\penalty0 533--540, 1965.

\bibitem[Mundhenk et~al.(2000)Mundhenk, Goldsmith, Lusena, and
  Allender]{Mundhenk00}
M.~Mundhenk, J.~Goldsmith, C.~Lusena, and E.~Allender.
\newblock Complexity of finite-horizon {M}arkov decision process problems.
\newblock \emph{Journal of ACM}, 47:\penalty0 681--720, 2000.

\bibitem[Papadimitriou and Tsitsiklis(1987)]{Papadimitriou87}
C.~H. Papadimitriou and J.~N. Tsitsiklis.
\newblock The complexity of {M}arkov decision processes.
\newblock \emph{Mathematics of operations research}, 12\penalty0 (3):\penalty0
  441--450, 1987.

\bibitem[Peters et~al.(2005)Peters, Vijayakumar, and Schaal]{Peters05NA}
J.~Peters, S.~Vijayakumar, and S.~Schaal.
\newblock Natural actor-critic.
\newblock In \emph{Proceedings of the Sixteenth European Conference on Machine
  Learning}, pages 280--291, 2005.

\bibitem[Sutton et~al.(2000)Sutton, McAllester, Singh, and
  Mansour]{Sutton-McAllester-Singh-Mansour-2000}
R.~Sutton, D.~McAllester, S.~Singh, and Y.~Mansour.
\newblock Policy gradient methods for reinforcement learning with function
  approximation.
\newblock In \emph{Advances in Neural Information Processing Systems}, 2000.

\bibitem[Vlassis et~al.(2012)Vlassis, Littman, and Barber]{Vlassis12}
N.~Vlassis, M.~L. Littman, and D.~Barber.
\newblock On the computational complexity of stochastic controller optimization
  in {POMDP}s.
\newblock \emph{ACM Transactions on Computation Theory}, 4\penalty0
  (4):\penalty0 12, 2012.

\bibitem[Williams(1992)]{Williams92SS}
R.~Williams.
\newblock Simple statistical gradient-following algorithms for connectionist
  reinforcement learning.
\newblock \emph{Machine Learning}, 8:\penalty0 229--256, 1992.

\bibitem[Ye(2005)]{Ye05}
Y.~Ye.
\newblock A new complexity result on solving the {M}arkov decision problem.
\newblock \emph{Mathematics of Operations Research}, 30:\penalty0 733--749,
  2005.

\bibitem[Zuckerman(2006)]{Zuckerman06}
D.~Zuckerman.
\newblock Linear degree extractors and the inapproximability of max clique and
  chromatic number.
\newblock In \emph{38th Annual ACM Symposium on Theory of Computing}, pages
  681--690. ACM, 2006.

\end{thebibliography}

\end{document}